\newtheorem{definition}{Definition}
\newtheorem{theorem}{Theorem}
\newcommand{\distas}[1]{\mathbin{\overset{#1}{\kern\z@\sim}}}%
\newcommand*\circled[1]{\tikz[baseline=(char.base)]{
            \node[shape=circle,fill,inner sep=1pt] (char) {\footnotesize \textcolor{white}{#1}};}}
\title{Unifying Explainable Anomaly Detection and Root Cause Analysis in Dynamical Systems}
\author{
    %Authors
    % All authors must be in the same font size and format.
    Yue Sun,     
    Rick S. Blum,
    Parv Venkitasubramaniam\\
}
\title{My Publication Title --- Single Author}
\author {
    Author Name
}
\title{My Publication Title --- Multiple Authors}
\author {
    % Authors
    First Author Name\textsuperscript{\rm 1,\rm 2},
    Second Author Name\textsuperscript{\rm 2},
    Third Author Name\textsuperscript{\rm 1}
}
\begin{document}

\maketitle

% Uncomment the following to link to your code, datasets, an extended version or similar.
%
% \begin{links}
%     \link{Code}{https://aaai.org/example/code}
%     \link{Datasets}{https://aaai.org/example/datasets}
%     \link{Extended version}{https://aaai.org/example/extended-version}
% \end{links}

\begin{abstract}
Dynamical systems, prevalent in various scientific and engineering domains, are susceptible to anomalies that can significantly impact their performance and reliability. This paper addresses the critical challenges of anomaly detection, root cause localization, and anomaly type classification in dynamical systems governed by ordinary differential equations (ODEs). We define two categories of anomalies: cyber anomalies, which propagate through interconnected variables, and measurement anomalies, which remain localized to individual variables. 
To address these challenges, we propose the Interpretable Causality Ordinary Differential Equation (ICODE) Networks, a model-intrinsic explainable learning framework. ICODE leverages Neural ODEs for anomaly detection while employing causality inference through an explanation channel to perform root cause analysis (RCA), elucidating why specific time periods are flagged as anomalous.
ICODE is designed to simultaneously perform anomaly detection, RCA, and anomaly type classification within a single, interpretable framework. Our approach is grounded in the hypothesis that anomalies alter the underlying ODEs of the system, manifesting as changes in causal relationships between variables. We provide a theoretical analysis of how perturbations in learned model parameters can be utilized to identify anomalies and their root causes in time series data. Comprehensive experimental evaluations demonstrate the efficacy of ICODE across various dynamical systems, showcasing its ability to accurately detect anomalies, classify their types, and pinpoint their origins. 

\end{abstract}

\section{Introduction}

\begin{figure}
\centering
\includegraphics[width=0.99\columnwidth]{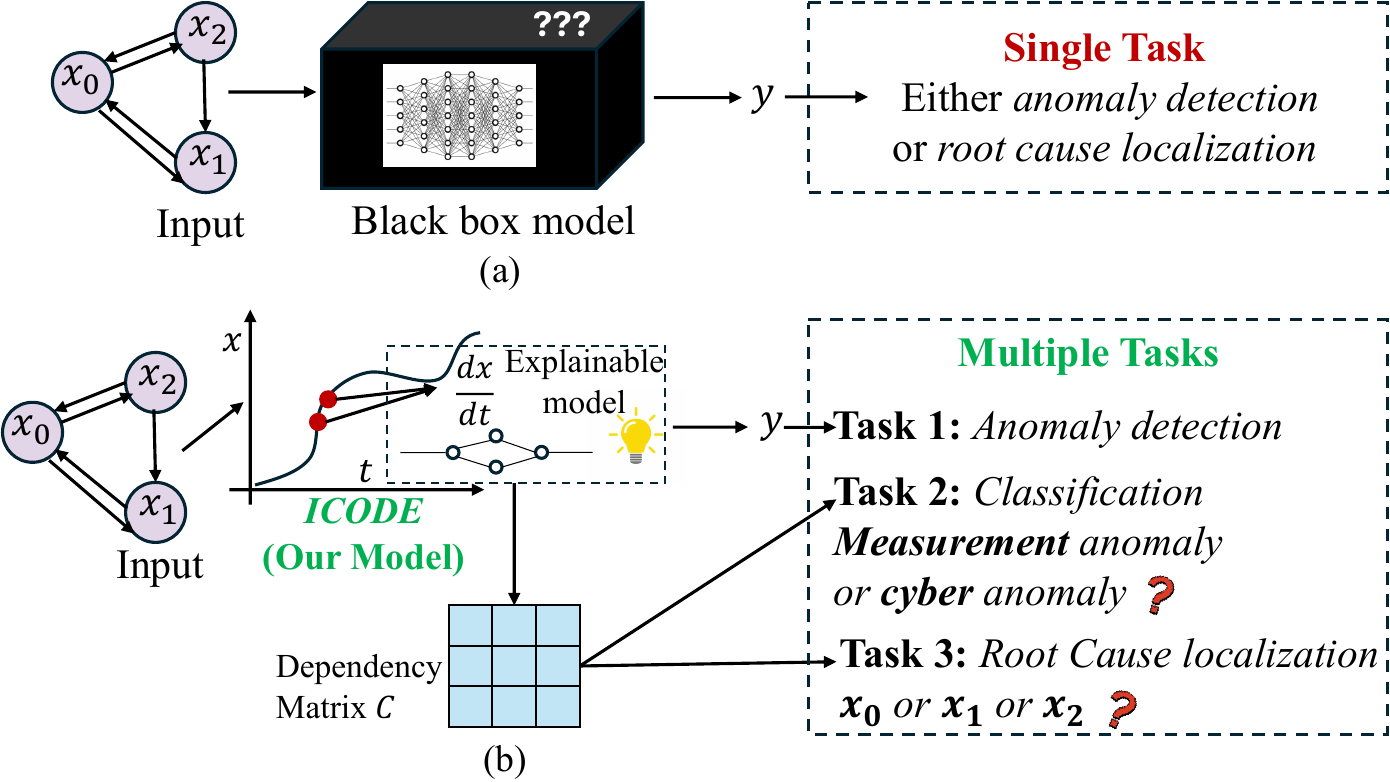} 
\caption{(a) Traditional models typically perform either anomaly detection or root cause localization in isolation, limiting their comprehensive analysis capabilities.
(b) Our proposed model integrates anomaly detection, anomaly type classification and root cause localization within a single model. It achieves this by leveraging both the model output and the learned dependency matrix, which encapsulates the causal relationships between variables in the system.
}.

\label{fig:introduction}
\vspace{-0.27in}
\end{figure}

Dynamical systems, which describe the evolution of variables over time, are fundamental to numerous domains, including natural sciences such as physics~\cite{willard2020integrating}, biology~\cite{laurie2023explainable}, and chemistry~\cite{keith2021combining}, as well as engineering fields like transportation systems~\cite{shepherd2014review} and power systems~\cite{ju2021fault}. 
These systems often encounter various types of anomalies, each with potentially different levels of urgency and impact~\cite{gao2022detection}. 
For instance, in a traffic management system, anomalies can range from minor sensor malfunctions to major incidents like accidents or road closures. 
The urgency and impact of these anomalies can vary significantly: a critical accident requires immediate attention and may have widespread effects on traffic flow across the network, whereas a single malfunctioning speed sensor might not necessitate urgent intervention if the overall transportation network is functioning normally. 
Similarly, in a power grid, a sudden voltage spike could indicate a potentially catastrophic equipment failure requiring immediate action, while a gradual drift in a non-critical sensor reading might be addressed during routine maintenance. 
This variability in urgency and impact underscores the importance of not only detecting anomalies but also comprehensively understanding their origins and potential consequences, a process commonly referred to as root cause analysis (RCA).

Machine learning (ML) based anomaly detection~\cite{pang2021deep} and RCA algorithms~\cite{soldani2022anomaly} have emerged as state-of-the-art approaches due to their capability to learn complex functions through data-driven methods~\cite{teng2010anomaly, blazquez2021review} . 
However, existing models face several challenges. 
First, deep neural network-based models~\cite{choi2021deep, li2023deep} typically require large datasets to train complex ML networks, often producing only binary results indicating the presence or absence of an anomaly. 
This is insufficient for many applications where pinpointing the underlying causes of system faults is equally important ~\cite{wang2023incremental}. 
Second, beyond localizing the root cause, understanding the type of anomaly is crucial for determining its urgency and appropriate response. 
Existing models, particularly complex deep learning based models, cannot be used to differentiate types of anomalies as relevant to dynamical systems in consideration, thus limiting their scope of application.
Last, many dynamical systems adhere to differential equations, complicating the learning process and making it difficult to develop models that can accurately detect anomalies and perform RCA in such systems.

In this work, we address these challenges by first mathematically defining two types of anomalies that are commonly observed in dynamical systems~\cite{luo2021deep}: \textit{cyber} anomalies and \textit{measurement} anomalies. 
While these types of anomalies are known to exist in practice, our contribution lies in providing a mathematical framework for their definition and analysis. 
Measurement anomalies are typically caused by errors during data acquisition and are confined to local faults. 
For example, in a power grid, if a voltage sensor at a single substation displays an incorrect reading due to calibration issues, it would be considered a measurement anomaly. 
Conversely, cyber anomalies result from interactions between variables, with the root cause potentially affecting neighboring sensors. 
% An illustration of this would be a car accident causing a significant drop in speed readings from multiple sensors in the vicinity, classified as a cyber anomaly due to its widespread impact on neighboring sensors. 
In the power grid context, a cyber anomaly could manifest as a cascading voltage instability, where a fault in one part of the network propagates and affects voltage readings across multiple interconnected substations. 
This mathematical distinction is crucial for understanding the scope and potential impact of detected anomalies and forms the foundation for our subsequent analysis and model development.

Our primary objective is to develop an interpretable ML model capable of performing anomaly detection, type classification and RCA, within a single model, based on causality inference, as shown in Figure~\ref{fig:introduction}. 
Our driving hypothesis is that if a variable in a dynamical system adheres to an ordinary differential equation (ODE), an anomaly would alter the ODE, which can be monitored by changes in the causality relationship (i.e., changes in the causal graph weights learned from data). 
Simultaneously, changes in this relationship can indicate the type of error. Consequently, we reframe the problem as causality inference with high accuracy within the corresponding ODE system.

Inspired by Neural ODEs~\cite{chen2018neural}, we introduce the Interpretable Causality Ordinary Differential Equation (ICODE) Networks, a novel prediction-based model for anomaly detection and root cause analysis in dynamical systems. 
ICODE utilizes Neural ODEs to capture system evolution, learning the governing ODEs directly rather than the integration process, which enhances both accuracy and efficiency in modeling dynamical systems. 
The model detects anomalies by identifying significant deviations between predicted variables and ground truth, indicating substantial changes in the underlying ODEs. 
Crucially, ICODE's interpretability stems from its ability to analyze the learned parameters, which represent causality relationships within the system. 
Upon detecting an anomaly, this analysis enables the determination of both the type and location of the root cause, providing valuable insights into system behavior during anomalies.
\begin{figure}
\centering
\includegraphics[width=0.99\columnwidth]{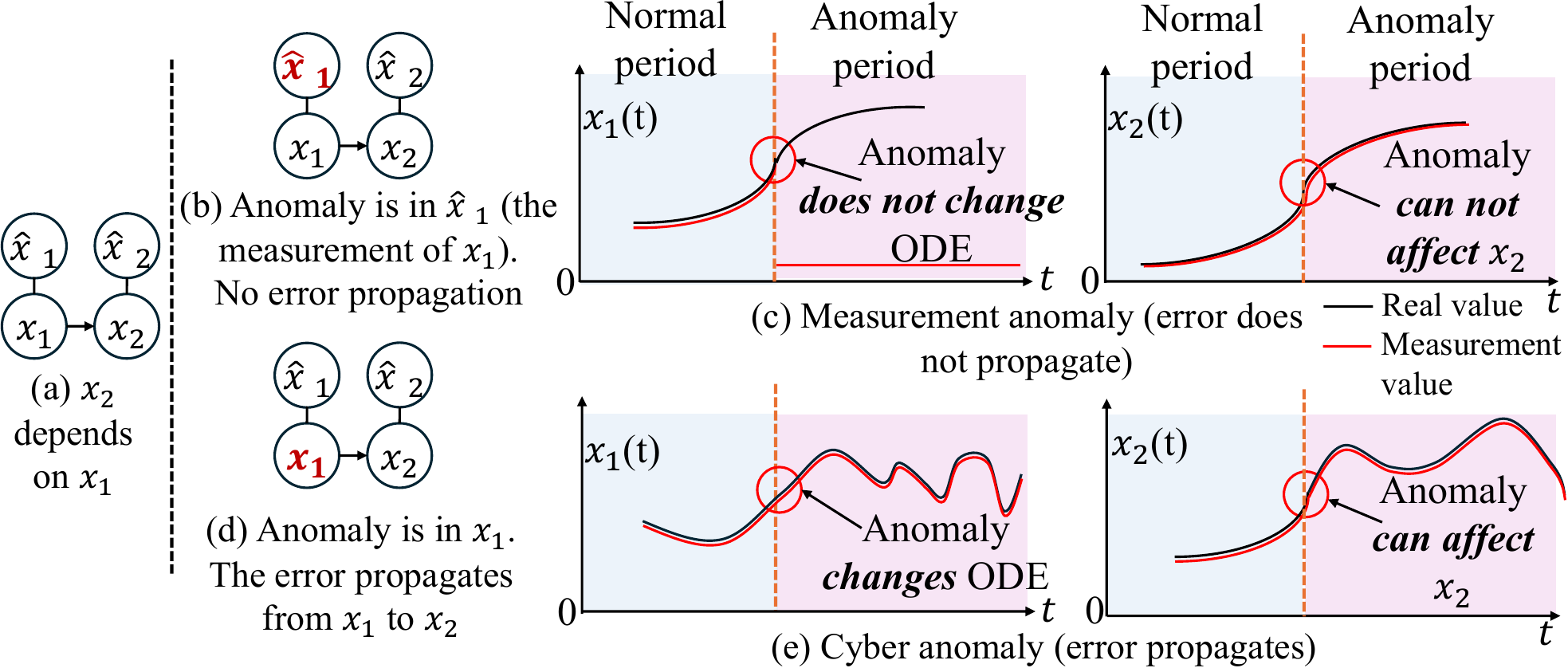} 
\caption{Comparison of Measurement and Cyber Anomalies in a Two-variable System
(a) Normal state.
% : variables $x_1$ and $x_2$ with their respective measurements $\hat{x}_1$ and $\hat{x}_2$.
(b) Measurement anomaly affects the measurement of a variable without altering underlying ODEs.
(c) Measurment anomaly does not propagate to dependent variables.
(d) Cyber anomaly alters the underlying ODE and lead to an anomaly. 
(e) Cyber anomaly propagates anomalies to dependent variables.
}.
\label{fig:problem-def}
\vspace{-0.27in}
\end{figure}

We summarize our contributions as follows:
\begin{itemize}
    \item We define cyber anomalies and measurement anomalies based on whether the root cause affects dependent variables or remains localized.
    \item We propose an interpretable ML model called Interpretable Causality Ordinary Differential Equation (ICODE) Networks, which utilizes Neural Ordinary Differential Equations Networks for anomaly detection, anomaly type classification and root cause localization.
    \item We provide a theoretical analysis of the pattern of changes in learned parameters that can be used to localize root causes and classify the anomaly type in dynamical systems data when the ODE is altered.
    \item Extensive experiments on three simulated ODE systems demonstrate ICODE's superior performance in anomaly detection, root cause localization, and anomaly type classification compared to state-of-the-art methods.
\end{itemize}

\section{Related Work}

\textbf{Anomaly detection models.}
Deep learning models~\cite{pang2021deep} have been widely adopted for anomaly detection due to their powerful capabilities. 
Variational autoencoder (VAE) based models~\cite{zhou2021vae} are utilized to learn latent representations for this task. 
The Anomaly Transformer~\cite{xu2021anomaly} demonstrates significant effectiveness by jointly modeling pointwise representations and pairwise associations. 
Additionally, a diffusion model~\cite{xiao2023imputation} has been proposed to detect anomalies through data imputation. 
However, as these models grow in size and complexity, they become increasingly opaque, providing only binary detection results without insight into the underlying decision-making process. 
% This lack of transparency makes it challenging to understand how the models arrive at their conclusions. 
Moreover, these models typically require large datasets, which may not be feasible in many real-world applications.

\begin{figure*}
\centering
\includegraphics[width=0.81\textwidth]{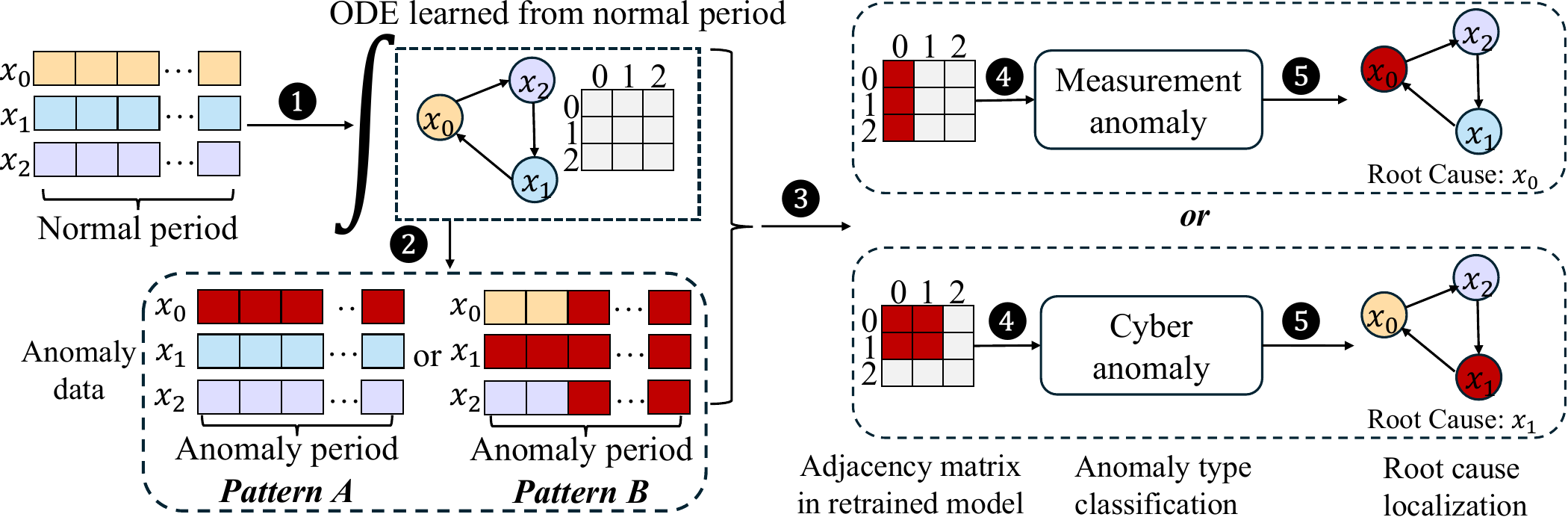} 
\caption{A 3-variable example dynamical system. 
$\protect\circled{1}$ Train ICODE using data from normal periods to establish the baseline causality relationship $C$.
$\protect\circled{2}$ Apply the trained ICODE to detect anomalies in anomaly period.
$\protect\circled{3}$ Retrain the model using the anomalous data to obtain a new causality relationship $C'$.
$\protect\circled{4}$ Analyze the causality difference $|C-C'|$ to classify the anomaly type; $\protect\circled{5}$ Apply Eq. (\ref{eq:get-root-cause}) or Eq. (\ref{eq:get-root-cause-2}) to identify the root cause.
}.

\label{fig:architecture}
\vspace{-0.27in}
\end{figure*}

\textbf{Root Cause Analysis models.}
Root cause analysis (RCA) using machine learning~\cite{lei2020applications, richens2020improving} has gained increasing popularity in recent years. 
For example, a variational autoencoder model~\cite{han2023root} leverages a known causal graph to identify root causes based on Pearl's structural causal model. 
However, these models often require precise causal relationships and separate models for anomaly detection, which is challenging in practice. 
Obtaining accurate causal graphs is sometimes challenging, and deploying two individual models can be cost-prohibitive. 
Other RCA methods based on causal structures have been proposed~\cite{budhathoki2022causal, wang2023interdependent}, but they typically assume anomalies are outliers, 
which is not always valid, leading to potential inaccuracies in the learned causality architecture.

\textbf{Model Explainability using Neural ODEs.} 
Model-intrinsic-explainable models are designed with inherent transparency, allowing direct interpretation of their decisions and processes from their internal structures~\cite{Lakkaraju2016,Lou2012}. Self-explainable Neural Networks (SENN) employ explainable basis concepts for explicit and faithful predictions and classification~\cite{alvarez2018towards}. 
Building on SENN, the GVAR model learns causal relationships within time series data using an interpretable framework~\cite{marcinkevivcs2021interpretable}. 
Additionally, time series systems often follow differential equations, and Neural ODEs have been proposed to approximate these systems~\cite { jia2019neural, asikis2022neural}. 
However, using model-intrinsic-explainable models enhanced by Neural ODEs to learn patterns in dynamic systems for anomaly detection and RCA remains largely unexplored.
% Self-explaining neural networks (SENN)~\cite{alvarez2018towards} represent a class of intrinsically interpretable models motivated by properties of explicitness, faithfulness, and stability. 
% A SENN with a link function $g(*)$ and interpretable basis concepts $h(x):\mathbb{R}^p \rightarrow \mathbb{R}^k$ follows the form
% \begin{equation}
% \label{eq:senn}
%     f(x) = g(\theta(x)_1h(x)_1, ..., θ(x)_kh(x)_k), 
% \end{equation}
% where $x\in \mathbb{R}^p$ are predictors; $\theta(*)$ is a neural network with $k$ outputs.
% In~\cite{marcinkevivcs2021interpretable}, $g(*)$ is set as the sum of the raw inputs, Eq. (\ref{eq:senn}) is simplified as 
% \begin{equation}
%     f(x) = \sum_{j=1}^p \theta(x)_j x_j.
% \end{equation}

\section{Notations and Problem definition}
Consider a time series in a dynamical system with $p$ variables: $X(t) = \{ x_1(t), x_2(t), ..., x_p(t) \}$, which follows causal relationships represented by a directed graph $\mathcal{G} = \{ X(t), \mathcal{E} \}$. In this graph, each node $x_i$ corresponds to a variable in $X(t)$, and each edge $(i, j) \in \mathcal{E}$ indicates whether variable $x_i$ can influence variable $x_j$. 
We assume the evolution of variable $x_i$ satisfies a differential equation:
\begin{equation}
\label{eq:system}
\begin{aligned}
    \frac{dx_i(t)}{dt} & = f_i \left( X(t), \mathcal{G}  \right), \forall 1 \leq i \leq p,    \\
        x_i(t) & = x_i(0) + \int_0^t f_i \left( X(t), \mathcal{G}  \right) dt, 
\end{aligned} 
\end{equation}
where
% $x_{1:(t-1)}^j$ is a shorthand notation for $x_1^j, x_1^j, ..., x_{t-1}^j$; 
% $\epsilon_t^i$ is noise;
where $f_i$ is an unknown nonlinear function specifying the evolution of $x_i$ at time $t$.
Sensors are deployed to measure variables, denoted as $\{ \hat{x}_1, \hat{x}_2, ..., \hat{x}_p\}$, with noise $\epsilon$:
\begin{equation}
    \hat{x}_i = x_i + \epsilon.
\end{equation}
% , depend on the value of $x$ at the features and the past.

We define two types of anomalies: the \textit{measurement} anomaly and the \textit{cyber} anomaly.
The measurement anomaly only affects the measurement of the variable but does not affect the differential equation, as illustrated in Figure~\ref{fig:problem-def}.

\begin{definition}
\textbf{Measurement Anomaly}: 
% The anomaly only affects the measurement of the variable but does not alter the differential equation. In the system defined in Eq. (\ref{eq:system}), the anomalous reading of variable $\hat{x}_i(t)$ at time $t$ with the initial value $x_i(0)$ is given by:
We define a system to exhibit a measurement anomaly in variable $i$ at time $t$, if the variable measurement at time $t$ is given by:
% \begin{equation}
% \label{eq:physical}
% \begin{aligned}
%     % x_i(t) & = x_i(0) + \int_0^t f_i \left( \mathcal{V}(X(t), i)  \right) dt,  \\
%     \hat{x}_i'(t) & = \hat{x}_i(t)+Z(t),
% \end{aligned}  
% \end{equation}
% where $Z(t)$ is a function representing the anomaly. 
\begin{equation}
\label{eq:physical}
\begin{aligned}
    % x_i(t) & = x_i(0) + \int_0^t f_i \left( \mathcal{V}(X(t), i)  \right) dt,  \\
    \hat{x}_i'(t) & = A(\hat{x}_i(t)),
\end{aligned}  
\end{equation}
where $A(\cdot)$ is an anomalous function such that $A(x) \neq x$.
% where $A(*)$ is a function (e.g., $A(x)=x+5$) that causes the anomalous reading at variable $x_i$. 
\end{definition}
For our theoretical analysis, we assume that for a period of time, $A(\cdot)$ does not change.
Unlike measurement anomalies, cyber anomalies directly alter variables within the ODE, modifying the system's evolution and potentially propagating to dependent variables. 
Such anomalies can be caused by cyber-attacks, including false data injection~\cite{sayghe2020survey}, which fundamentally disrupt the system's dynamics.

\begin{definition}
\textbf{Cyber Anomaly}: 
% The anomaly affects the differential equation by changing the value of the input variable. In the system defined in Eq. (\ref{eq:system}), 
We define a system to exhibit a cyber anomaly in variable $i$ at time $t$, with anomalous function denoted as $A(\cdot)$,
if variable $x_i$ is changed to 
\begin{equation}
    x_i'(t) = A(x_i(t)), A(x) \neq x.
\end{equation} 
% \begin{equation}
%     x_i'(t) = x_i(t) + Z(t).
% \end{equation} 

% the reading of variable $x_i$ which is influenced by $x_j$ at time $t$ with the initial value $x_i(0)$ is given by:
% \begin{equation}
% \label{eq:physical}
% \begin{aligned}
%     \hat{x}_i(t) = x_i(0) + \int_0^t f_i (\{ x_1(t), x_2(t), ..., \\
%     x_j'(t), ..., x_p(t) \}, \mathcal{G})  dt,    
% \end{aligned}
% \end{equation}    
\end{definition}
% \noindent where $A(\cdot)$ is an anomalous function such that $A(x) \neq x$.

Given a $t$-length measurement of time series, we aim to propose a model-intrinsic-explainable model to conduct:
\begin{enumerate}
    \item Detect whether the system has an anomaly.
    \item Determine the type of root cause (measurement anomaly or cyber anomaly).
    \item Identify which variable is the root cause of the anomaly.
\end{enumerate}

We achieve this by proposing a model that can demonstrate
the change in the causality relationship given the $t$-length
time sequence.

\section{Methodology}
We propose a model-intrinsic-explainable framework for time series prediction called Interpretable Causality Ordinary Differential Equation (ICODE) networks. 
This model is designed to make predictions by learning the driven ODE in the dynamical system and inferring causality relationships within a time series through its explanation channel.
The ICODE model is defined as follows
\begin{equation}
\label{eq:ICODE}
\begin{aligned} 
\frac{dX(t)}{dt} & = \Phi_{\theta}(X(t))X(t) + b, \\
\hat{X}(t+1) & = X(t) + \int_{t}^{t+1} \left( \Phi_{\theta}(X(\tau))X(\tau) + b\right) d\tau,    
\end{aligned}
\end{equation}
where $b \in \mathbb{R}^{p \times 1}$ is the bias term. 
$\Phi_{\theta}(X(t)):\mathbb{R}^p \rightarrow \mathbb{R}^{p \times p}$ is a neural network parameterised by $\theta$, learning the ODE governing the dynamical system.
The output of $\Phi_{\theta}(X(t))$ is an $p$ by $p$ matrix representing the causality dependency between variables, corresponding to the weight in causality graph $\mathcal{G}$.
Specifically, the element at $i, j$ of $\Phi_{\theta}(X(t))$ represents the influence of $x_j(t+\delta t)$ on $x_i(t)$, where $\delta t$ is a very small time interval.
A larger element at $i, j$ of $\Phi_{\theta}(X(t))$ indicates a high likelihood $(i,j)$ is an edge in $\mathcal{E}$.

As shown in Eq. (\ref{eq:ICODE}), the model could predict $X(t+1)$ based on the input $X(t)$ at time $t$, by letting the neural network $\Phi_{\theta}(\cdot)$ learn the evolution of $X(t)$.
For this reason, we employ Neural ODE, a machine learning framework that approximates the integral process, given the initial value, differential equation, and time interval. 
The prediction is then given by:
\begin{equation}
\begin{aligned}
\hat{X}(t+1)  = & NeuralODE(X(t), \\
& ( \Phi_{\theta}(X(t))X(t)
+ b), [t, t+1]).    
\end{aligned}
\end{equation}

\noindent The dependency matrix, representing relationships between variables in $X(t)$, can be explored through $\Phi_{\theta}(X(t))$.
% To extract a data-driven causal graph that closely matches the ground-truth dependence, we explore the relationships between variables in $X(t)$ through the dependency matrix $\Phi_{\theta}(X(t))$. 
Enforcing sparsity in $\Phi_{\theta}(X(t))$ is crucial, as it ensures that the learned model captures the immediate dependence between variables as expressed through the differential equation. 
This sparsity is essential for accurately representing the true causal structure. 
To achieve this sparsity in $\Phi_{\theta}(X(t))$, we train ICODE using the following loss function:
\begin{equation}
\label{eq:loss}
\frac{1}{T-1} \sum_{t=0}^{T-1} ( \hat{X}(t+1) - X(t+1) )^2 + \frac{\lambda}{T-1} \sum_{t=0}^{T-1} R(\Phi_{\theta}(X(t+1)),
\end{equation}
where the first term represents the mean squared error (MSE) of the one-step prediction using ICODE, and
$R(\cdot)$ is a sparsity-inducing penalty term.
$\lambda$ is a hyperparameter that balances the importance of sparsity.
The architecture of ICODE is shown in Figure~\ref{fig:architecture}.

\subsection{Anomaly Detection}
Anomalies in the system, whether cyber anomalies or measurement anomalies, cause significant deviations in the prediction $\hat{X}(t+1)$ from the ground truth $X(t+1)$. The anomaly score is calculated as follows:
\begin{equation}
\label{eq:anomaly-score}
AnomalyScore = \sum_{t=1}^{T} |\hat{X}(t) - X(t)|.
\end{equation}

\subsection{Anomaly Type Classification}
Our hypothesis is that the root cause of an anomaly is located at the variable where the causality relationship changes significantly. 
% Intuitively, for the $T-1$ relationship matrix generated by data, we would like to know the median at each pair $(i,j)$.
The causality relationship is inferred from $\Phi_{\theta}(\cdot)$~\cite{marcinkevivcs2021interpretable}:
\begin{equation}
\small
\label{eq:causility}
C_{i,j} = median_{0 \leq t \leq T-1} \left\{ | \Phi_{\theta, (i, j)}(X(t)) | \right\}, \textit{for } 1 \leq i, j \leq p.
\end{equation}
\normalsize
% We calculate the causality relationship matrix $C_{i,j}$ during the normal period, and the causality relationship matrix $C'_{i,j}$ during the anomaly period. 
% Once we get the two matrices, we need to identify which variable has the most significant change when the anomaly happens. 
% The following function localizes the Root Cause:
We compute the causality relationship matrix $C_{i,j}$ during normal periods and $C'_{i,j}$ during anomaly periods,
where a higher value in $C_{i,j}$ and $C'_{i,j}$ represents a variable of high likelihood $i$ that influences variable $j$ during normal and anomaly period respectively.
% To determine whether a causal influence exists between variables, we define a binary relationship matrix $\Bar{C}_{i,j}$, expressed as:
% \begin{equation}
%     \Bar{C}_{ij} = \mathbb{I}(C_{i,j} \geq \gamma),
% \end{equation}
% where $\gamma$ is a threshold parameter that indicates the presence of influence between variables. 
% The function $\mathbb{I}$ represents an indicator matrix that selects elements no smaller than $\gamma$.

The inherent differences between cyber anomalies and measurement anomalies manifest in how they affect the underlying ODEs and, consequently, the causality relationships learned by ICODE. This distinction forms the basis for our anomaly type classification.
To determine the type of anomaly, we calculate a measurement anomaly score $M$, which indicates the probability that observed causality changes are due to a measurement anomaly. The score is given by:
\begin{equation}
\label{eq:anomaly-type}
M(C, C') = \max_i \sum_{j=1}^p \frac{\mathbb{I}\left(|C_{i,j}-C'_{i, j}| \geq \Bar{\gamma}\right)}{p},
\end{equation}
where $C$ and $C'$ are the causality matrices in non-anomalous and anomalous data, $\Bar{\gamma}$ is a threshold used to select the $m$ largest values, $\mathbb{I}$ is an indicator function used to select the top $m$ elements in $|C-C'|$, and $p$ is the number of variables.

The calculation process involves calculating the proportion of the largest $m$ elements in the difference matrix $|C-C'|$ for each row, and taking the maximum proportion as the measurement score $M(C, C')$. A greater $M(C, C')$ indicates a higher likelihood that the anomaly is a measurement anomaly, since the changes are concentrated on one column. While a lower score suggests a cyber anomaly. This classification is crucial for subsequent root cause localization, which we will discuss in the following section.

% As shown in Theorem~\ref{theorem:measurement},
% measurement anomalies typically only alter instant readings without significantly impacting the overall system dynamics. As a result, the difference in causality relationships $|C - C'|$ for measurement anomalies usually concentrates on a single variable. 
% In contrast, cyber anomalies can propagate to variables physically proximate to the root cause, leading to a more widespread alteration of causality relationships. 
% This propagation creates a distinctly different pattern compared to measurement anomalies. By visualizing these causality graphs, we can effectively identify and classify the type of anomaly present in the system. 

% \begin{figure}[!ht]
%     \centering
%     % \vspace{-.40in}
%     \subfloat[Measurement anomaly.]
%     {\label{fig:anomaly-measurement}
%     \includegraphics[width=0.4\columnwidth]{figures/add-5.png}}
%     %\caption{fig1}
%     % \vspace{-.0in}
%     \hfill%
%     \subfloat[Cyber anomaly.]
%     {\label{fig:anomaly-cyber}
%     \includegraphics[width=0.45\columnwidth]{figures/cyber-1.png}}
%     \caption{Examples of measurement anomaly patterns and cyber anomaly patterns. 
%     }
% \end{figure}

\subsection{Root Cause Localization}

The root cause score representing the likelihood of variable $i$ being the root cause of measurement anomalies is given by:
\begin{equation}
\label{eq:get-root-cause}
    % \argmax_i 
    S(i) = \left( 
    \sum_{j=1}^{p}|{C}_{i,j} - {C}'_{i,j}| + \sum_{j=1}^{p}|{C}_{j,i} - {C}'_{j,i}|
    \right)_i.
\end{equation}
The root cause location is indicated by the largest combined change across both the row (first term) and column (second term) sums.
Whereas the root cause score of variable $i$ causing cyber anomalies is given by:
\begin{equation}
\label{eq:get-root-cause-2}
    % \argmax_i 
    S'(i)=\sum_{k \in \{k:{C}_{i, k}=1 \ or \ {C}_{k, i}=1\}}S(k),
\end{equation}
% \begin{equation}
% \label{eq:get-root-cause-2}
% \sum_{i=0}^{n} \sum_{j=0}^{n} \left({C} \odot |C-C'| \right)_{i,j}
% \end{equation}
representing the sum of the changed weight that is related to variable $i$ according to the learned causality graph.

% The following section will provide a detailed analysis of why these different patterns emerge in dynamical systems and how they relate to the underlying physical processes.

\section{Theoretical Analysis}
In this section, we provide theoretical support for how measurement and cyber anomalies distinctly affect the weights of the causal graph, as represented by the properties of $|C - C'|$, when finite difference methods are used to approximate the integration in Eq. (\ref{eq:ICODE}).
The fundamental difference between these anomaly types lies in their position relative to the integral process in the dynamical system.
Measurement anomalies occur outside the integral function, leading to a localized impact on the causal graph. We formalize this observation in the following theorem:

% \begin{theorem}
% \label{theorem:measurement}
% In the dynamical system that is defined in Eq. (\ref{eq:system}), the measurement anomaly at variable $i$ could change the weight at any edges $(i, j), j = 1, 2, ..., n$.
% Whereas the cyber anomaly at variable $i$ changes the weight at any edge $(i, k)$ that is connected to variable $i$ and any edge $(j, k)$ that is connected to variable $j$, where $(i, j) \in \mathcal{E}$.
% \end{theorem}

\begin{theorem}
\label{theorem:measurement}
% Consider the dynamical system defined in Eq. (\ref{eq:system}), and we learn the system using a neural network defined in Eq. (\ref{eq:ICODE}) with the finite difference method. A measurement anomaly at variable $i$ may alter the weight of any edge $(i,j)$, where $j=1,2,...,p$. Whereas a cyber anomaly at variable $i$ may alter the weight of any edge $(i,k)$ connected to variable $i$, as well as any edge $(j,k)$ connected to variable $j$, where $(i, j) \in \mathcal{E}$.
% Consider a dynamical system $X(t)$ as given in Eq. (\ref{eq:ICODE}), where $\mathcal{G}=(X(t), \mathcal{E})$ denotes the underlying dependency graph. 
% Let $C(\cdot,\cdot)$ and $C'(\cdot,\cdot)$ denote the causality matrices extracted using non-anomalous and anomalous data, respectively. 
% Assuming $C(i,j)\neq 0$ iff $(i,j)\in \mathcal{E}$:
% (1) If the anomaly is a measurement anomaly at variable $k$, then $C(i,j)-C’(i,j) \neq 0$ only if $k=i$.
% (2) If the anomaly is a cyber anomaly at variable $k$, then $C(i,j)-C’(i,j) \neq 0$ only if $i=k$ and $(i,j)\in \mathcal{E}$.
Consider a dynamical system $X(t)$ as learned using Eq. (\ref{eq:ICODE}), where $\mathcal{G}=(X(t), \mathcal{E})$ denotes the underlying dependency graph. 
Let $C(\cdot,\cdot)$ and $C'(\cdot,\cdot)$ denote the causality matrices extracted using non-anomalous and anomalous data, respectively. 
Assuming $C(i,j)\neq 0$ iff $(i,j)\in \mathcal{E}$, 
and {the diagonal elements of $C$ are much larger than the off-diagonal elements,} $C(i,i) \gg C(i,j), i \neq j$, then
(1) If the anomaly is a measurement anomaly at variable $k$, then $|C(k,j)-C’(k,j)| \gg |C(k',j)-C’(k',j)| > 0$, where $(k, k') \in \mathcal{E}$.
(2) If the anomaly is a cyber anomaly at variable $k$, then $|C(k,j)-C’(k,j)| > 0$ only if $(k,j)\in \mathcal{E}$.
\end{theorem}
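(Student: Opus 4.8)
The plan is to reduce both claims to a perturbation analysis of the linear finite-difference regression implied by Eq. (\ref{eq:ICODE}). Replacing the integral by a forward-Euler step of unit length gives, for each variable $i$, the identity $x_i(t+1)-x_i(t)=\sum_{j}\Phi_{\theta,(i,j)}(X(t))\,x_j(t)+b_i$, so that the median-based entry $C(i,j)$ of Eq. (\ref{eq:causility}) is essentially the coefficient that regresses the observed trajectory of $x_j$ onto the finite difference of $x_i$, and $C'$ is the same regression run on the anomalous data. The whole argument then consists of tracking, row by row, which targets (the left-hand finite differences) and which regressors (the right-hand variables) are altered by each anomaly type, and comparing the induced coefficient changes against the baseline magnitudes fixed by the assumption $C(i,i)\gg C(i,j)$.

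For part (1), I would note that a measurement anomaly $\hat{x}_k\mapsto A(\hat{x}_k)$ (Eq. (\ref{eq:physical})) leaves the true dynamics, hence every finite difference $x_j(t+1)-x_j(t)$ for $j\neq k$, unchanged, and corrupts only the trajectory of $x_k$ itself. Two things therefore happen. In the equation for $x_k$ (row $k$) both the target finite difference and the dominant diagonal regressor $x_k$ are replaced by their $A$-images, so the diagonal coefficient $C(k,k)$ --- the largest entry of the row by diagonal dominance --- must move to refit an entirely new relationship, forcing a large $|C(k,j)-C'(k,j)|$. In a neighbouring equation $k'$ with $(k,k')\in\mathcal{E}$ the target is unchanged and only the single off-diagonal regressor $x_k$ is corrupted, so the coefficient $C(k',k)$ must shift to keep fitting the same target; this shift is nonzero (giving the strict lower bound $>0$) but is controlled by an off-diagonal magnitude, which is $\ll C(k,k)$. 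Quantifying both shifts through the least-squares normal equations and invoking $C(k,k)\gg C(k,j)$ then yields the dominance $|C(k,j)-C'(k,j)|\gg|C(k',j)-C'(k',j)|$.

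For part (2), the key structural observation is that a cyber anomaly $x_k\mapsto A(x_k)$, unlike a measurement anomaly, acts \emph{inside} the integral: it changes the values propagated through the already-existing couplings of $\mathcal{G}$ but introduces no new functional dependence between previously unrelated variables. Hence in the retrained regression the variable $x_j$ can enter the equation for $x_k$ with a nonzero coefficient only if it already did so, i.e. only if $(k,j)\in\mathcal{E}$; for non-edges both $C(k,j)$ and $C'(k,j)$ remain at the sparsity-enforced value $0$, giving $|C(k,j)-C'(k,j)|>0\Rightarrow(k,j)\in\mathcal{E}$. I would make this precise by showing that the perturbed finite-difference system still admits a solution supported on the original edge set, and that the sparsity penalty $R(\cdot)$ in the loss selects that support.

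The main obstacle I anticipate is turning the qualitative ``$\gg$'' in part (1) into a defensible statement: the comparison between the row-$k$ change and the neighbour-row change rests on a sensitivity bound for the finite-difference coefficients, and this bound must be phrased so that the off-diagonal baseline smallness ($C(k,k)\gg C(k,j)$) transfers to the \emph{changes} $|C-C'|$ rather than merely to the levels. I would handle this by linearising $A$ over the anomalous window (justified by the assumption that $A(\cdot)$ is fixed there) and bounding the neighbour-row perturbation by the product of the off-diagonal coupling strength and the size of the regressor corruption, while the row-$k$ perturbation inherits the full diagonal scale; care is also needed so that the median in Eq. (\ref{eq:causility}) does not wash out the per-timestep changes, which I would address by assuming the anomaly persists over a majority of the window.
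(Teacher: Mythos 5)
Your proposal is correct in substance, and it splits into two halves relative to the paper. Part 1 is essentially the paper's own argument: the paper equates the normal-period relation $X(t+1)-X(t)=\int_t^{t+1}\Phi_{\theta}(X(\tau))X(\tau)\,d\tau$ with the refit on anomalous readings $\tilde{X}=X+Z$ (with $Z$ supported on the single coordinate $k$) to obtain the perturbation identity $(\Phi_{\theta}(X(t))-\Phi_{\theta'}(\tilde{X}(t)))X(t)=\Phi_{\theta'}(X(t))Z$, and then applies diagonal dominance so that the $k$-th component $\Phi_{\theta',(k,k)}z$ dominates the neighbour component $\Phi_{\theta',(k',k)}z$; that is exactly your ``corrupted target plus corrupted dominant regressor versus unchanged target plus one corrupted off-diagonal regressor'' dichotomy, phrased algebraically rather than as a least-squares sensitivity bound. (Your extra worries --- linearising $A$ over the window, and the median in Eq.~(\ref{eq:causility}) washing out transient changes --- are legitimate and are simply not addressed by the paper, which works at the same informal level you do.) Part 2 is where you genuinely diverge. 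You prove the contrapositive of the stated claim: a non-edge $(k,j)\notin\mathcal{E}$ cannot acquire a non-zero change, because the corrupted dynamics $f_i(\ldots,A(x_k),\ldots)$ still factorize over the original edge set, so an $\mathcal{E}$-supported solution of the perturbed system exists and the sparsity penalty $R(\cdot)$ selects it. The paper instead argues by contradiction in the opposite direction: supposing some edge $(k,j)\in\mathcal{E}$ has zero change, it concludes the whole matrix difference vanishes, hence $\int_t^{t+1}\Phi_{\theta}(X(\tau))Z(\tau)\,d\tau=0$, hence $Z\equiv 0$, contradicting $A(x)\neq x$; i.e., the paper proves ``edge implies change,'' the converse of the literal ``change only if edge.'' Your route matches the theorem statement as written and avoids the paper's weakest step (one vanishing entry does not force the entire matrix difference to vanish), at the cost of requiring a support-selection lemma for the penalized estimator; the paper's route is what actually backs the ``diffuse change across all edges of $k$'' picture it invokes later for anomaly-type classification. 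Both are acceptable at the paper's level of rigor, but be aware that they establish different implications.
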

\begin{proof}
    See Appendix.
\end{proof}

Theorem~\ref{theorem:measurement} indicates that a measurement anomaly with a root cause in variable $i$ only alters the values of $|C - C'|$ in column $i$ or row $i$ of the causality matrix. 
This localized effect is consistent with the nature of measurement anomalies, which implies that the dependency of the affected variable on any other variable can be altered regardless of ground truth dependence.
In contrast, cyber anomalies occur within the integral function, allowing their effects to propagate to dependent variables. 
Theorem~\ref{theorem:measurement} also suggests that a cyber anomaly with a root cause in variable $i$ can alter the values of $|C - C'|$ for all edges connected to variable $i$ and variable $j$, where the edge $(i,j)$ in causal graph $\mathcal{G}$. 
This more extensive impact reflects the propagating nature of cyber anomalies through interconnected variables in the system.

\section{Experiments}
\label{experiments}

The distinct patterns of change in the causality relationships $|C - C'|$ for measurement and cyber anomalies provide a robust foundation for anomaly type classification. Measurement anomalies produce a concentrated change in the causality matrix, primarily affecting a single variable's relationships, aligning with scenarios like sensor errors or calibration issues. Conversely, cyber anomalies result in a more diffuse pattern of changes, reflecting their ability to propagate through interconnected systems. These theoretical insights not only support our anomaly classification approach but also provide a deeper understanding of how different types of anomalies manifest in dynamical systems, enabling more targeted and appropriate responses to system irregularities.
The Purpose of our experiments is 
\begin{itemize}
    \item We evaluate ICODE's performance in identifying anomalies within dynamical systems, comparing it to state-of-the-art methods.
    \item We examine the model's capability to accurately pinpoint the location of the root cause, comparing it to state-of-the-art methods.
    \item  We demonstrate ICODE's unique ability to categorize anomalies.
\end{itemize}
% (1) to compare methods in terms of their ability to detect the anomaly, (2) to compare methods in terms of their ability to locate the root cause of the anomaly, and (3) to show the ability of our model to classify the anomaly. 

\subsection{Experiment Settings}
% \textbf{Datasets.}
Our experiments involve three simulated ODE systems, each comprising $20$ variables. For each system, we generate $100,000$ data points spanning normal, cyber anomaly, and measurement anomaly periods using specified ODEs. In both the cyber and measurement anomaly simulation process, we create $500$ data points for each anomaly instance, with every set of $500$ points sharing a common root cause. 
To form our final datasets, we downsample each period individually to $10,000$ points.

\subsubsection{N-species Lotka-volterra System}
The original Lotka–Volterra system~\cite{bacaer2011short} are used to model the relationship among $p$ predator and prey species, and the corresponding population sizes are denoted by $X(t) = \{x_1(t), x_2(t), ..., x_p(t) \}$. 
Population dynamics are given by the following coupled differential equations:
\begin{equation}
\begin{aligned}
    \frac{dx_i}{dt} = r_i x_i\left( 1 - \frac{\sum_{j=1}^{p} \beta_{ij}x_j}{K_i} \right),
\end{aligned} 
\end{equation}
where $r_i$ is inherent percapita growth rate, and $K_i$ is the carrying capacity, and $\beta_{ij}$ represents the effect of species $j$ on species $i$.
We initialize $\beta_{ij}$ randomly. 

\subsubsection{Lorenz-96 system}
Consider the Lorenz-96 system with $n$ variables denoted as $X(t) = \{x_1(t), x_2(t), ..., x_p(t) \}$, following an ordinary differential equation:
\small
\begin{equation}
\label{eq:lorenz96}
    \frac{dx_i(t)}{dt} = (x_{i+1}(t) - x_{i-2}(t))x_{i-1}(t) - x_i(t) + F, \forall 1 \leq i \leq n,
\end{equation}
\normalsize
where $x_{0} := x_p$, $x_{-1} := x_{p-1}$, and $F$ is a forcing constant. We set $F=10$.

\subsubsection{Reaction-diffusion system}

Reaction-diffusion systems consist of a local reaction and a diffusion, which cause the substances to spread out, and are used in modeling many domains in biology~\cite{fisher1937wave}, geology, and transportation. 
We pick up one reaction-diffusion system with $n$ variables denoted as $X(t) = \{x_1(t), x_2(t), ..., x_p(t) \}$, while the neighboring variables are connected by a diffusion process (e.g., $x_i$ is connected to $x_{i-1}$ and $x_{i+1}$):
\begin{equation}
    \frac{dx_i}{dt} = (x_{i-1}-x_{i}) + (x_{i+1}-x_{i}) + x_i(1-x_i),
\end{equation}
where the first two terms represent a diffusion process with diffusion coefficients $1$, and the third term represents the local reaction called Fisher's equation~\cite{fisher1937wave}.

\subsubsection{Anomaly Simulation}
We simulate the anomaly using the following equation: 
\begin{equation}
\label{eq:anomaly-value}
    A(x_i) = x_i + a, a \sim \mathcal{N}(\alpha, 1)
\end{equation}
where $a$ is a random variable from a Gaussian distribution with mean $\alpha$ and variance $1$.
We differentiate between measurement and cyber anomalies. 
Measurement anomalies are added to one variable after time series simulation, without altering the underlying ODE. 
Cyber anomalies, conversely, are incorporated during time series simulation, changing the underlying ODE.

\begin{table*}[!h]
\scriptsize
\centering
% \resizebox{1.95\columnwidth}{!}{
\caption{Performance of ICODE compared to baseline models in anomaly detection.}
\begin{tabular}{llcccccccccc}
    \midrule
    & & & \multicolumn{3}{c}{\textbf{Lotka-valterra}} & \multicolumn{3}{c}{\textbf{Lorenz-96}} & \multicolumn{3}{c}{\textbf{Reaction-diffusion}} \\
    \cmidrule(r){3-12}
    & & & Precision & Recall & F1 & Precision & Recall & F1 & Precision & Recall & F1  \\
    \midrule
    \multirow{3}{*}{$\alpha = 0.5$} & Deep SVDD & & 0.9033 & 0.9102 & 0.9067 & 0.9232 & 0.9019 & 0.9124 & 0.9235 & 0.9444 & 0.9338 \\
    & Anomaly Transformer & & \textbf{0.9284} & 0.9311 & \textbf{0.9307} & \textbf{0.9555} & 0.9434 & 0.9494 & 0.9514 & 0.9548 & 0.9514 \\
    & ICODE & & 0.9240 & \textbf{0.9363} & 0.9301 & {0.9475} & \textbf{0.9593} & \textbf{0.9533} & \textbf{0.9675} & \textbf{0.9624} & \textbf{0.9649} \\
    \midrule
    \multirow{3}{*}{$\alpha = 1$} & Deep SVDD & & 0.9224 & 0.9342 & 0.9282 & 0.9310 & 0.9326 & 0.9317 & 0.9468 & 0.9658 & 0.9562 \\
    & Anomaly Transformer & & \textbf{0.9374} & 0.9345 & 0.9359 & \textbf{0.9607} & 0.9568 & 0.9587 & 0.9815 & 0.9763 & 0.9788 \\
    & ICODE & & 0.9367 & \textbf{0.9472} & \textbf{0.9419} & {0.9595} & \textbf{0.9693} & \textbf{0.9594} & \textbf{0.9827} & \textbf{0.9772} & \textbf{0.9799} \\
    \midrule
\end{tabular}
\label{table:main-result}
\end{table*}

\subsubsection{Baselines}
For anomaly detection, we compare our model with Deep SVDD~\cite{zhou2021vae} and Anomaly Transformer~\cite{xu2021anomaly} to demonstrate comparable performance on simulated datasets. 
For root cause analysis, we benchmark against RootClam~\cite{han2023root} and CausalRCA~\cite{budhathoki2022causal}, which are state-of-the-art models in RCA. 

\subsection{Results and Analysis}

% \begin{table}[!h]
% \scriptsize
% \centering
% \caption{Performance of ICODE compared to baseline models in RCA (Not Ready).}
% \resizebox{0.95\columnwidth}{!}{
% \begin{tabular}{lccccccc}
%     \midrule
%     & \multicolumn{6}{c}{\textbf{$\alpha = 0.5$}}  \\
%     \midrule
%     & & \multicolumn{2}{c}{\textbf{Lotka-Volterra}} & \multicolumn{2}{c}{\textbf{Lorenz-96}} & \multicolumn{2}{c}{\textbf{Reaction-Diffusion}} \\
%     \cmidrule(r){3-8}
%     & & top 2 & top 4 & top 2 & top 4 & top 2 & top 4 \\
%     \midrule
%     CausalRCA & & 0.9033 & 0.9102 & 0.9067 & 0.9232 & 0.9019 & 0.9124 \\
%     RootClam & & \textbf{0.9284} & 0.9311 & \textbf{0.9307} & 0.9555 & 0.9434 & 0.9494 \\
%     ICODE & & 0.9240 & \textbf{0.9363} & 0.9301 & \textbf{0.9475} & \textbf{0.9593} & \textbf{0.9675} \\
%     \midrule
%     & \multicolumn{6}{c}{\textbf{$\alpha = 1$}} \\
%     \midrule
%     CausalRCA & & 0.9224 & 0.9342 & 0.9282 & 0.9310 & 0.9326 & 0.9468 \\
%     RootClam & & \textbf{0.9374} & 0.9345 & 0.9359 & 0.9607 & 0.9568 & 0.9815 \\
%     ICODE & & 0.9367 & \textbf{0.9472} & \textbf{0.9419} & \textbf{0.9595} & \textbf{0.9693} & \textbf{0.9827} \\
%     \midrule
% \end{tabular}}
% \label{table:rca-result}
% \end{table}

\subsubsection{Anomaly Detection}

We train our model's anomaly detection performance using a dataset of $10000$ data points collected during normal operating conditions. To test the model, we create a mixed dataset containing samples from both normal and anomaly periods. 
In this section, we do not distinguish between anomaly types; all anomaly samples (both cyber and measurement anomaly) are marked as $1$, while samples from non-anomaly time periods are marked as $0$.
Anomaly scores are calculated for all samples using Eq. (\ref{eq:anomaly-score}), with samples with anomaly scores exceeding a specified threshold classified as anomalies. 
We conduct experiments with $\alpha$ values of $0.5$ and $1.0$ in Eq. (\ref{eq:anomaly-value}) to assess the model's sensitivity under different level of anomalies. 
Performance is measured using precision, recall, and F1 scores across three distinct datasets, with results shown in Table \ref{table:main-result}.

Table~\ref{table:main-result} shows that our method ICODE consistently outperforms the classical Deep SVDD and anomaly transformer model across all metrics, likely due to our ODE-based architecture simplifying the learning process. 
Compared to the Anomaly Transformer, our model shows slightly better performance overall, though marginally lower precision and F1 scores are observed with $\alpha=0.5$. 
This suggests that while our approach is generally more effective, it may be less adept at capturing subtle anomalies that occur with lower $\alpha$ values, compared to deep learning models. 
The superior performance of our model can be attributed to its ODE-based design, which leverages the quadratic nature of ODEs to more accurately model system dynamics and detect anomalies.
% While our model demonstrates strong anomaly detection performance, it's important to emphasize that this is a secondary feature. 
% Our primary objective is to provide interpretable explanations for RCA in dynamical systems, as mentioned in the following section. 
% The model's ability to localize root causes and classify anomaly types, derived from its ODE-based architecture and learned causality relationships, is its key contribution. 
% This explanatory power is crucial in complex systems where understanding the nature and origin of anomalies is as important as their detection. 

\subsubsection{Root Cause Localization}

\vspace{0.27in}
\begin{table*}[!h]
\scriptsize
\centering
\caption{Performance of ICODE compared to baseline models in root cause localization.}
\begin{tabular}{llcccccccccc}
    \midrule
    % & \multicolumn{9}{c}{\textbf{$\alpha = 0.5$}}  \\
    % \midrule
    & & & \multicolumn{3}{c}{\textbf{Lotka-valterra}} & \multicolumn{3}{c}{\textbf{Lorenz-96}} & \multicolumn{3}{c}{\textbf{Reaction-diffusion}} \\
    \cmidrule(r){4-12}
    & & & Top 1 & Top 3 & Top 5 & Top 1 & Top 3 & Top 5 & Top 1 & Top 3 & Top 5  \\
    \midrule
    % AE & 0.9599 & 0.9645 & 0.9621 & 0.9599 & 0.9645 & 0.9621 & 0.9599 & 0.9645 & 0.9621 \\
    \multirow{3}{*}{$\alpha = 0.5$} & CausalRCA & & 0.1565 & 0.3424 & 0.4805 & 0.6766 & 0.6928 & 0.7369 & 0.9235 & 0.9481 & 0.9655 \\
    & RootClam & & 0.3360 & 0.4247 & 0.5699 & 0.6260 & 0.6315 & 0.6724 & 0.9395 & 0.9681 & 0.9709  \\    
    & ICODE & &  \textbf{0.6713} & \textbf{0.7124} & \textbf{0.7246}  & \textbf{0.7546} & \textbf{0.7742} & \textbf{0.7883} & \textbf{0.9645} & \textbf{0.9783} & \textbf{0.9748} \\
    \midrule
    % & \multicolumn{9}{c}{\textbf{$\alpha = 1$}} \\
    % \midrule
    % AE & 0.9599 & 0.9645 & 0.9621 & 0.9599 & 0.9645 & 0.9621 & 0.9599 & 0.9645 & 0.9621 \\
    \multirow{3}{*}{$\alpha = 1$} & CausalRCA & & 0.2724 & 0.3922 & 0.5269 & 0.6978 & 0.7485 & 0.7538 & 0.8857 & 0.9563 & 0.9736 \\
    & RootClam & & 0.5546 & 0.5630 & 0.6554 & 0.6519 & 0.7586 & 0.7936 & 0.6554 & 0.9763 & 0.9788 \\
    & ICODE & & \textbf{0.7710} & \textbf{0.8109} & \textbf{0.8385} & \textbf{0.7632} & \textbf{0.7890} & \textbf{0.8241} & \textbf{0.9798} & \textbf{0.9911} & \textbf{0.9952} \\
    \midrule
\end{tabular}
\label{table:rca-result}
\end{table*}

We evaluate ICODE's root cause localization performance using simulated anomaly data. 
Based on the model trained in the previous section, we retrain it on $500$ samples from each anomaly period to obtain a new causal graph. 
In Eq. (\ref{eq:anomaly-type}), $m=10$ and $\Bar{\gamma}$ is the threshold of largest $10$ elements.
An anomaly is categorized as a measurement anomaly if $M(C, C') \geq 0.8$; otherwise, it is classified as a cyber anomaly. 
We then apply Eq. (\ref{eq:get-root-cause}) or Eq. (\ref{eq:get-root-cause-2}) to localize the root cause, ranking variables based on their root cause scores. 
Performance is assessed using Top-k accuracy, which measures whether the true root cause is included in the output top k set. 
This process is repeated for each anomaly period, with results summarized in Table~\ref{table:rca-result}.

Table~\ref{table:rca-result} shows that ICODE outperforms baseline models CausalRCA and RootClam, achieving higher Top-1, Top-3, and Top-5 accuracy across all three simulated datasets. 
The reaction-diffusion system, with its linear ODE function, yields relatively good results for all models. 
Performance degrades for Lotka-Volterra and Lorenz-96 systems, which we attribute to its more complex dynamics making causality inference more challenging.

\begin{figure}[!ht]
\label{fig:demo-class}
    \centering
    \subfloat[]
    {\label{fig:gt-g}
    \raisebox{0.9cm}{\includegraphics[width=0.4\columnwidth]{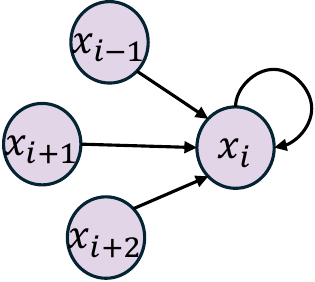}}}
    \hfill 
    \subfloat[]
    {\label{fig:gt-a}
    \hspace{-.11in}\includegraphics[width=0.55\columnwidth]{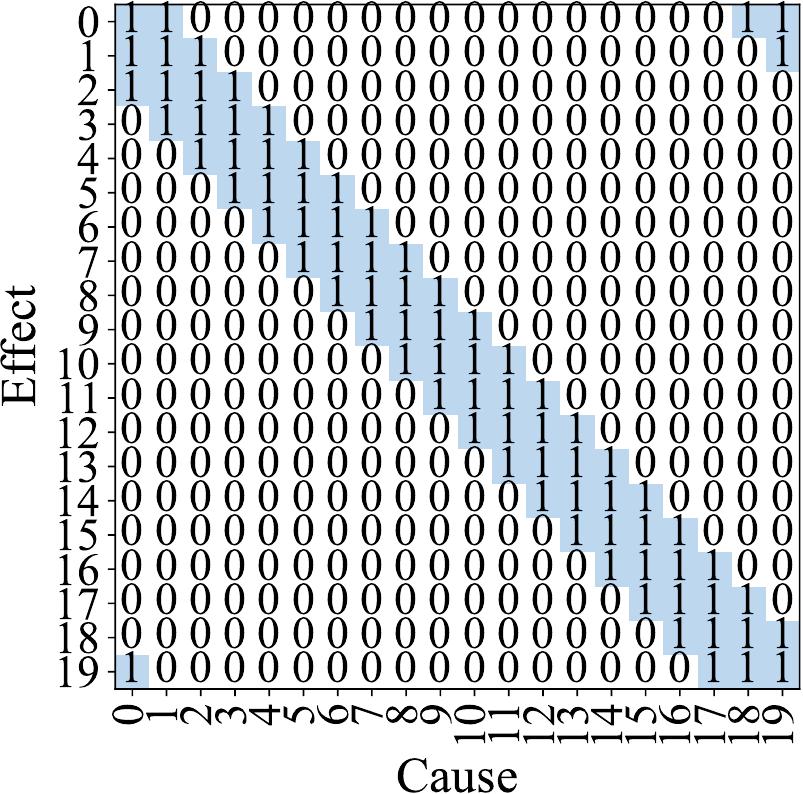}}
    \caption{(a) Graph representation $\mathcal{G}$ of causality relationship in the Lorenz-96 system.
    (b) Adjacency matrix of the causality graph $\mathcal{G}$. 
    }
    \label{fig:gt}
\end{figure}

These results support our hypothesis that learning the ODE in dynamical systems can improve anomaly detection and root cause localization. 
ICODE's approach aligns with the mechanism of anomaly occurrence: exceptions due to ODE changes are located by learning about the ODE and inferring the root cause through changes in ICODE parameters. 
In contrast, baseline models lack this grounding in dynamical systems science. CausalRCA's causal inference model may struggle to learn causal relationships in this context, while RootClam's use of distribution changes for root cause localization may be less effective in dynamical systems. 
In the following part, we will discuss the anomaly type classification based on the proposed theorems.

\subsubsection{Anomaly Type Classification}
To better demonstrate the impact of different anomalies, we provide an intuitive description of anomaly behavior using the Lorenz-96 system with $\alpha=5$, demonstrating strong cyber and measurement anomalies in a dynamical system. 
Figure \ref{fig:gt-g} illustrates the causality relationship in the Lorenz-96 system, with the corresponding adjacency matrix shown in Figure \ref{fig:gt-a}. 
% In these matrices, a value of 1 indicates that variable $i$ influences variable $j$.

To analyze anomaly patterns, we simulate $500$ data points each for cyber and measurement anomalies with the root cause at variable 10. 
For clarity, we employ K-means clustering to categorize weights into two clusters: high dependency ($1$) and low dependency ($0$). 
Figures \ref{fig:result-lorenze-a} and \ref{fig:result-lorenze-b} depict the resulting adjacency matrices during measurement anomalies and cyber anomalies, respectively.
Our analysis reveals distinct patterns for each anomaly type. 
In measurement anomalies (Figure \ref{fig:result-lorenze-a}), changes in causality relationships are localized, with significant alterations primarily in variable $10$'s dependencies compared to the normal state (Figure \ref{fig:gt-a}). 
Conversely, cyber anomalies (Figure \ref{fig:result-lorenze-b}) exhibit a more diffuse pattern, where variables proximal to the root cause show more substantial changes.

These observations form the basis for our anomaly type classification strategy. It is important to note that this example represents an ideal case with optimal training conditions. 
In real-world cases, particularly with lower $\alpha$ values (e.g., $\alpha=0.5$), the distinctions may be less pronounced. 
Since cyber and measurement anomalies are specifically defined in our work, there are no existing approaches available for comparison.

\begin{table}[h!]
\centering
\caption{Performance of ICODE in anomaly classification.}
\label{table:rca-class}
\begin{tabular}{lccc}
    \toprule
    $\alpha$ & Lotka-Volterra & Lorenz-96 & Reaction-Diffusion \\
    \midrule
    0.5 & 0.929 & 0.952 & 0.971 \\
    1   & 0.916 & 0.961 & 0.974 \\
    \bottomrule
\end{tabular}
\end{table}

Table~\ref{table:rca-class} presents the classification results based on the criteria mentioned in root cause localization.
The results in Table~\ref{table:rca-class} demonstrate ICODE's high accuracy in anomaly type classification, providing reliable insights that are not available from baseline models.
These findings further support our hypothesis that changes in causality relationships can effectively address three key tasks: anomaly detection, root cause localization, and anomaly classification. 
We believe that such explanatory capabilities are important to domain experts in system diagnostics, offering a level of interpretability crucial for real-world applications in complex dynamical systems.

\begin{figure}[!ht]
\label{fig:result-lorenze}
    \centering
    % \vspace{-.40in}
    \subfloat[Measurement Anomaly]
    {\label{fig:result-lorenze-a}
    \includegraphics[width=0.485\columnwidth]{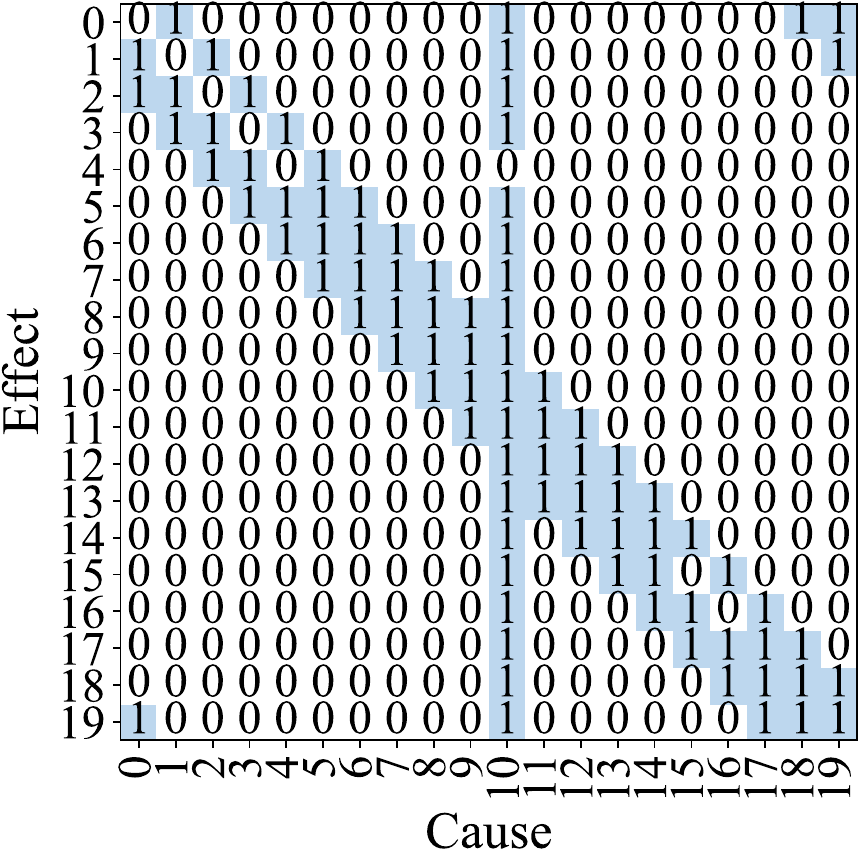}}
    %\caption{fig1}
    % \vspace{-.0in}
    \hfill%
    \subfloat[Cyber Anomaly]
    {\label{fig:result-lorenze-b}
    \includegraphics[width=0.485\columnwidth]{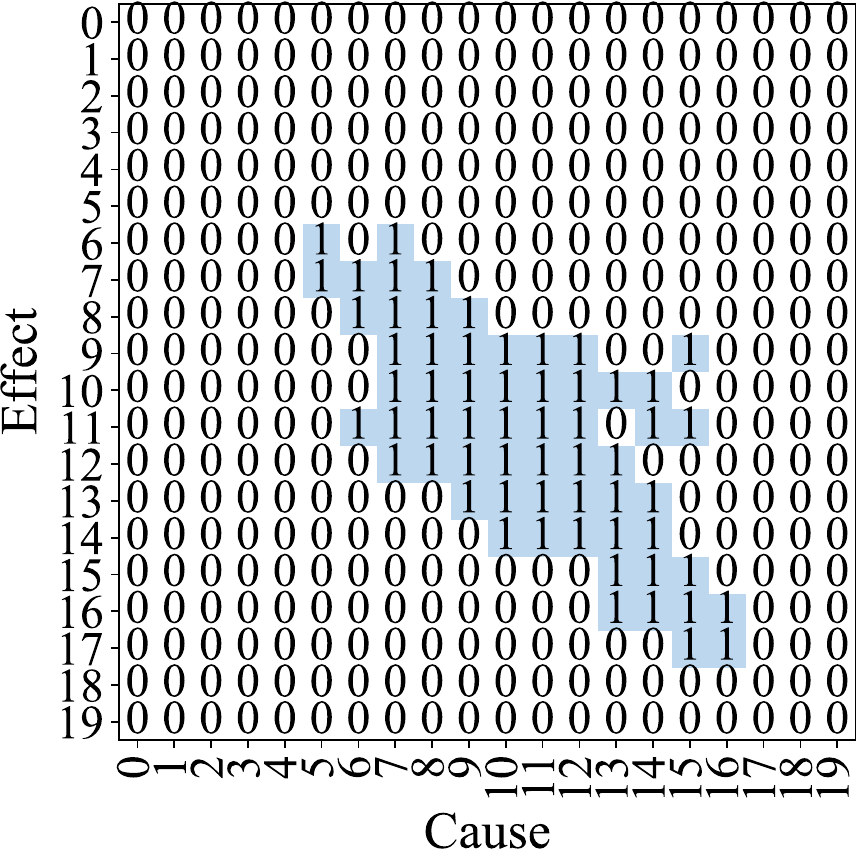}}
    \caption{Causality relationships during measurement and cyber anomalies when variable $10$ is the root cause. (a) The causality changes caused by the measurement anomaly are concentrated in the root cause variable. (b) The cyber anomaly changes the causal graph globally. 
    }
\end{figure}

\section{Conclusion}
% In this paper, we introduced ICODE, a novel approach that integrates anomaly detection, classification, and root cause localization in dynamical systems using Neural ODEs to model system dynamics and interpret causality. 
% ICODE outperformed state-of-the-art methods and provided valuable insights into system behavior during anomalies. Future work should explore its applicability to more complex datasets and various domains. Overall, ICODE represents a significant advancement in interpretable anomaly analysis for dynamical systems.

In this paper, we present ICODE, a novel approach integrating anomaly detection, classification and root cause localization in dynamical systems. Leveraging Neural ODEs to model system dynamics and interpret causality relationships, ICODE demonstrates superior performance over state-of-the-art methods. Its key contribution lies in its interpretability, providing valuable insights into system behavior during anomalous events by analyzing changes in learned causality relationships. While showing promising results, future work should focus on testing ICODE on more complex real-world datasets and exploring its applicability across different domains. Overall, ICODE represents a significant advancement in anomaly analysis for dynamical systems, offering a comprehensive, interpretable solution that bridges the gap between anomaly detection with root cause analysis, and explainable AI.

\section{Acknowledgements}
This work was sponsored by a Research Futures grant from Lehigh University and the Office of Naval Research (ONR) under grant number N00014-22-1-2626.

\bibliography{aaai25}

\section{Proof}

\setcounter{theorem}{0}
\begin{theorem}
\label{theorem:measurement-appendix}
Consider a dynamical system $X(t)$ as learned using Eq. (\ref{eq:ICODE}), where $\mathcal{G}=(X(t), \mathcal{E})$ denotes the underlying dependency graph. 
Let $C(\cdot,\cdot)$ and $C'(\cdot,\cdot)$ denote the causality matrices extracted using non-anomalous and anomalous data, respectively. 
Assuming $C(i,j)\neq 0$ iff $(i,j)\in \mathcal{E}$, 
and {the diagonal elements of $C$ are much larger than the off-diagonal elements,} $C(i,i) \gg C(i,j), i \neq j$, then
(1) If the anomaly is a measurement anomaly at variable $k$, then $|C(k,j)-C’(k,j)| \gg |C(k',j)-C’(k',j)| > 0$, where $(k, k') \in \mathcal{E}$.
(2) If the anomaly is a cyber anomaly at variable $k$, then $|C(k,j)-C’(k,j)| > 0$ only if $(k,j)\in \mathcal{E}$.
\end{theorem}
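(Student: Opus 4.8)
The plan is to reduce the continuous model of Eq.~(\ref{eq:ICODE}) to its forward-Euler discretization and then analyze the learned coefficient matrix one row at a time as a linear regression problem. With step size $1$, the finite-difference form of Eq.~(\ref{eq:ICODE}) reads $X(t+1)-X(t)=\Phi_{\theta}(X(t))X(t)+b$, so row $i$ of $\Phi_{\theta}$ is precisely the coefficient vector that the objective in Eq.~(\ref{eq:loss}) fits to explain the one-step increment $\Delta x_i(t)=x_i(t+1)-x_i(t)$ from the regressors $\{x_j(t)\}_j$. Since $C(i,j)$ is the median magnitude of $\Phi_{\theta,(i,j)}$ over time (Eq.~(\ref{eq:causility})), it suffices to track how each row's fitted coefficients move when we pass from clean to anomalous data. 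I would record two structural facts used throughout: the sparsity hypothesis $C(i,j)\neq 0\iff(i,j)\in\mathcal{E}$ pins off-graph coefficients at zero in both $C$ and $C'$, and the diagonal-dominance hypothesis $C(i,i)\gg C(i,j)$ ensures that in each row the self-coupling term carries most of the explanatory weight.

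For the measurement anomaly at variable $k$ (part~1), I would write the corruption as $x_k'(t)=x_k(t)+\delta_k(t)$ with $\delta_k=A(x_k)-x_k\neq 0$ and use that, since a measurement anomaly does not touch the ODE, the true trajectories --- hence every target increment $\Delta x_i$ and every regressor $x_j$ with $j\neq k$ --- are unchanged. The argument then splits in two. In row $k$ itself the fitted target becomes $\Delta x_k'=A(x_k(t+1))-A(x_k(t))$ while the dominant diagonal regressor is now the corrupted $x_k'$; re-solving the row-$k$ regression forces a large readjustment of the row-$k$ coefficients, and because the target itself is corrupted the fit recruits regressors regardless of the graph, producing a large $|C(k,j)-C'(k,j)|$. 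In a neighbour row $k'$ with $(k,k')\in\mathcal{E}$ the target $\Delta x_{k'}$ is unchanged but one regressor, $x_k$, is perturbed by $\delta_k$, so the only way to keep fitting the correct target is to adjust the off-diagonal coefficient $\Phi_{k'k}$, an order-$C(k,j)$ term; this makes $|C(k',j)-C'(k',j)|$ strictly positive but of smaller order. Comparing the diagonal-driven row-$k$ movement against the small off-diagonal compensation in row $k'$, with the scale separation supplied by $C(k,k)\gg C(k,j)$, yields $|C(k,j)-C'(k,j)|\gg|C(k',j)-C'(k',j)|>0$.

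For the cyber anomaly at variable $k$ (part~2), I would use that the substitution $x_k\to A(x_k)$ now occurs inside the dynamics, so it enters only the equations of variables whose ODE actually contains $x_k$, namely the neighbours prescribed by $\mathcal{E}$. For any pair with $(k,j)\notin\mathcal{E}$ the corresponding variable never appears in the relevant equation, so the sparsity hypothesis gives $C(k,j)=C'(k,j)=0$ and the difference vanishes; contrapositively, $|C(k,j)-C'(k,j)|>0$ can hold only when $(k,j)\in\mathcal{E}$, which is exactly the claim. The point to stress is that, unlike the measurement case, a cyber anomaly rescales couplings that already exist but never manufactures off-graph entries, because it acts through the genuine coupling encoded by $\mathcal{E}$.

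The main obstacle I anticipate is making the ``$\gg$'' in part~1 rigorous rather than heuristic. I would formalize it through a first-order perturbation analysis of the per-row normal equations, treating $\delta_k$ as the perturbation, and then bound the row-$k$ coefficient shift (driven by a corrupted target together with a large diagonal coefficient) against the neighbour-row shift (driven by an unchanged target and a small off-diagonal coefficient), with the separation of scales supplied precisely by $C(k,k)\gg C(k,j)$. Some care is also needed so that the median-over-time extraction of Eq.~(\ref{eq:causility}) does not wash out these per-timestep effects; I would handle this by arguing the sign and order of the coefficient shifts are stable across the anomalous window, consistent with the standing assumption that $A(\cdot)$ is fixed over the period.
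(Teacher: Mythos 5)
Your proposal is sound at roughly the paper's own level of rigor, but it reaches the result by a partly different route. For part (1), the paper does not set up per-row regressions: it equates the two representations of the same increment $X(t+1)-X(t)$ --- once through the clean-trained network $\Phi_{\theta}$ acting on $X$, once through the anomaly-trained network $\Phi_{\theta'}$ acting on $\tilde{X}=X+Z$, where $Z$ has a single nonzero entry $z$ at index $k$ --- and extracts the identity $(\Phi_{\theta}(X(t))-\Phi_{\theta'}(\tilde{X}(t)))X(t)=\Phi_{\theta'}(X(t))Z$. The right-hand side has dominant $k$-th component $\Phi_{\theta',(k,k)}z$ and small neighbor components $\Phi_{\theta',(k',k)}z$, which is precisely where the ``$\gg$'' comes from. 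Your normal-equations perturbation is essentially the first-order condition that would reproduce this identity, so part (1) is the same mechanism (diagonal dominance plus a corruption supported on coordinate $k$) in a different formalization; the paper's identity delivers the scale separation immediately, whereas you defer it to an unexecuted perturbation analysis. The genuine divergence is part (2): you prove the literal ``only if'' claim by arguing that sparsity is preserved under a cyber anomaly, so $(k,j)\notin\mathcal{E}$ forces $C(k,j)=C'(k,j)=0$; the paper instead argues by contradiction that every edge $(k,j)\in\mathcal{E}$ must exhibit $|C(k,j)-C'(k,j)|>0$, i.e., it establishes the \emph{converse} of the stated implication (and does so via a questionable leap from ``one entry of $C-C'$ vanishes'' to ``the whole matrix difference vanishes''). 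Your direction matches the statement as written and avoids that leap; its cost is that you must extend the sparsity hypothesis to $C'$, which is not among the stated assumptions, though it is the natural reading since a cyber anomaly perturbs only couplings that exist in $\mathcal{E}$. One caution on your part (1): under the paper's additive anomaly a constant offset cancels in the row-$k$ target $A(x_k(t+1))-A(x_k(t))$, so the large row-$k$ shift should be attributed to the corrupted dominant regressor rather than to a corrupted target; your scale-separation conclusion survives, but that particular clause of your argument does not hold in general.
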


\begin{proof}
Part 1 Measurement Anomaly: 

We start by assuming the model defined in Eq. (\ref{eq:ICODE}) learns the dynamical system with high accuracy, and $\Phi(\cdot)$ represents the learned ODE governing the dynamical system:
\begin{equation}
\label{eq:ground-truth-ODE}
\begin{aligned}
    \frac{dX(t)}{dt} & = \Phi_{\theta}(X(t))X(t), \\
    X(t+1) & = X(t) + \int_{t}^{t+1}  \Phi_{\theta}(X(\tau))X(\tau) d \tau, 
\end{aligned}
\end{equation}
where $X(t)$ is the state of the system at time $t$, and $\Phi_{\theta}(\cdot): \mathbb{R}^{p} \rightarrow \mathbb{R}^{p \times p}$ is a neural network learning the dependency with parameter $\theta$.

% To approximate $X(t+1)$, we employ the finite difference method: 
% \begin{equation}
% \begin{aligned}
%     \Phi_{\theta}(X(t)) X(t) \Delta t = \int_{t}^{t+1} \Phi_{\theta}(X(\tau))X(\tau) d \tau,
% \end{aligned}
% \end{equation}
% where $\Delta t$ is the time step, typically set to $1$.

We denote the neural network trained on the system with anomalies as $\Phi_{\theta'}(\cdot)$. When the system undergoes a measurement anomaly, the ODE in Eq. (\ref{eq:ground-truth-ODE}) modifies to:
\begin{equation}
    \Tilde{X}(t+1) = \Tilde{X}(t) + \left( \int_{t}^{t+1} \Phi_{\theta'}(\Tilde{X}(\tau))(\Tilde{X}(\tau)) d \tau \right), 
\end{equation}
where $\Tilde{X}$ represents the reading with anomaly, and $Z(t)=A(X(t))-X(t)$ is the impact of the anomaly on the variable observation, with only one non-zero element.

By replacing $\Tilde{X}(t) = X(t) + Z$, we have
\begin{equation}
\label{eq:thm-1-proof}
    {X}(t+1) - {X}(t) = \left( \int_{t}^{t+1} \Phi_{\theta'}(\Tilde{X}(\tau))({X}(\tau) + Z) d \tau \right), 
\end{equation}
where ${X}(t+1) - {X}(t)$ could be calculated using the evolution in the normal period:
\begin{equation}
\label{eq:thm-1-proof2}
    {X}(t+1) - {X}(t) = \left( \int_{t}^{t+1} \Phi_{\theta}({X}(\tau)){X}(\tau) d \tau \right).
\end{equation}

By Eq. (\ref{eq:thm-1-proof}) and Eq. (\ref{eq:thm-1-proof2}), and since the equation holds for any $t$:
\begin{equation}
    \Phi_{\theta}({X}(t)){X}(t) = \Phi_{\theta'}(\Tilde{X}(t))({X}(t) + Z).
\end{equation}

The difference between $\Phi_{\theta}(\cdot)$ and $\Phi_{\theta'}(\cdot)$ can be expressed as:
\begin{equation}
\label{eq:theorem-1}
    (\Phi_{\theta}(X(t)) - \Phi_{\theta'}(\Tilde{X}(t))) X(t) = \Phi_{\theta'}(X(t)) Z.
\end{equation}

Since $Z$ contains only one non-zero element (denoted $z$), the product $\Phi_{\theta'}(X(t))Z$ is sparse, with only a few non-zero components corresponding to the root cause $k$ and its neighbor $k'$ where $(k, k') \in \mathcal{E}$.

Given that the diagonal entries of $\Phi_{\theta'}(X(t))$ are much larger than the off-diagonal ones, i.e., $C(i,i) \gg C(i,j)$, we have:
\begin{equation}
\label{eq:eq23}
\begin{aligned}
    (\Phi_{\theta'}(X(t))Z)_k & = \Phi_{\theta'}(X(t))_{k,k} z \gg \\ \Phi_{\theta'}(X(t))_{k,k'} z & = (\Phi_{\theta'}(X(t))Z)_{k'}.    
\end{aligned}
\end{equation}

This implies that the $k$-th row of the matrix difference $|\Phi_{\theta}(X(t)) - \Phi_{\theta'}(X(t))|$ will exhibit a significantly larger magnitude than the others. Specifically,
\[
|C(k,j) - C'(k,j)| \gg |C(k',j) - C'(k',j)| > 0,
\]
where $k$ is the root cause and $(k,j) \in \mathcal{E}$.

Part 2 Cyber Anomaly: 

% \begin{proof}
We consider the trained model representing a cyber anomaly, defined similarly to the measurement anomaly case with $Z(t) = A(X(t))-X(t)$:
\begin{equation}
X(t+1) = X(t) +  \int_{t}^{t+1} \Phi_{\theta}(X(\tau)) \left( X(\tau)+Z(\tau)\right) d \tau ,
\end{equation}
where $Z$ represents the anomaly, with only one non-zero element.
The difference between the neural networks $\Phi_{\theta}(X(t))$ and $\Phi_{\theta'}(X(t))$ is given by:
\begin{equation}
\label{eq:theorem2-2}
(\Phi_{\theta}(X(t)) - \Phi_{\theta'}(X(t))) X(t) = \int_{t}^{t+1} \Phi_{\theta}(X(\tau)) Z(\tau) d\tau.
\end{equation}
To prove our theorem, we set up a proof by contradiction:
Suppose there exists $i=k$ and $(i,j) \in \mathcal{E}$ such that $C(i,j) - C'(i,j) = 0$. This would imply that $(\Phi_{\theta}(X(t)) - \Phi_{\theta'}(X(t))) = 0$, leading to:
\begin{equation}
\label{eq:contradiction}
\int_{t}^{t+1} \Phi_{\theta}(X(\tau)) Z(\tau) d\tau = 0.
\end{equation}
However, Eq. (\ref{eq:contradiction}) implies that $Z(t)=0$, which contradicts the definition of an anomaly where $A(x) \neq x$. Therefore, our initial assumption must be false, proving the theorem.

% Since $Z$ is not a function of time, we can simplify Eq. (\ref{eq:theorem2-2}) using the finite difference method to:
% \begin{equation}
% \label{eq:theorem2-3}
%     (\Phi_{\theta}(X(t)) - \Phi_{\theta'}(X(t))) X(t) & = \Phi_{\theta}(X(t)) \int_t^{t+1}Z(\tau)d\tau.
%     % = \Phi_{\theta}(X(t)) \mathbb{E}_{\tau\in[t, t+1]}[Z(\tau)] . 
% \end{equation}

% Here, $\Phi_{\theta}(\cdot)$ defines a causality relationship among variables in $X(t)$, where the element at position $(i,j)$ indicates the influence of variable $i$ influence variable $j$.

% Due to the matrix multiplication, Eq. (\ref{eq:theorem2-3}) implies that only the variable connected to the one influenced by the anomaly will change. 

\end{proof}

\end{document}